\newcommand{\Bfalse}{\bot}
\newcommand{\Band}{\,\wedge\,}
\newcommand{\Bor}{\,\vee\,}
\newcommand{\Bnot}{\neg}
\newcommand{\Bimp}{\,\rightarrow\,}
\newcommand{\Bool}{\mathbb{B}}
\newcommand{\st}{\,|\,}
\pgfplotsset{compat=1.18}
\theoremstyle{plain}
\newtheorem{theorem}{Theorem}[section]
\theoremstyle{definition}
\theoremstyle{remark}
\icmltitlerunning{Can LLMs Learn Formal Logic?}
\begin{document}

\twocolumn[
\icmltitle{Can Large Language Models Learn Formal Logic? \\ A Data-Driven Training and Evaluation Framework}




\begin{icmlauthorlist}
\icmlauthor{Yuan Xia}{usc}
\icmlauthor{Akanksha Atrey}{nbl}
\icmlauthor{Fadoua Khmaissia}{nbl}
\icmlauthor{Kedar S. Namjoshi}{nbl}
\end{icmlauthorlist}

\icmlaffiliation{usc}{University of Southern California, Los Angeles, CA, USA}
\icmlaffiliation{nbl}{Nokia Bell Labs, Murray Hill, NJ, USA}

\icmlcorrespondingauthor{Yuan Xia}{yuanxia@usc.edu}

\icmlkeywords{logical reasoning, large language models, Hilbert proofs, reinforcement learning with validator feedback}

\vskip 0.3in
]



\printAffiliationsAndNotice{}

\begin{abstract}
This paper investigates the logical reasoning capabilities of large language models (LLMs). For a precisely defined yet tractable formulation, we choose the conceptually simple but technically complex task of constructing proofs in Boolean logic. A trained LLM receives as input a set of assumptions and a goal, and produces as output a proof that formally derives the goal from the assumptions. Incorrect proofs are caught by an automated proof checker. A critical obstacle for training is the scarcity of real-world proofs. We propose an efficient, randomized procedure for synthesizing valid proofs and introduce Template Transformation, a data augmentation technique that enhances the model's ability to handle complex logical expressions. 
The central evaluation question is whether an LLM has indeed learned to reason. We propose  tests to measure the reasoning ability of a black-box LLM. By these measures, experiments demonstrate strong reasoning capabilities for assertions with short proofs, which decline with proof complexity. Notably, template transformation improves accuracy even for smaller models, suggesting its effectiveness across model scales.


\end{abstract}

\section{Introduction}

In July 2024, The New York Times published an article aptly titled “\textit{A.I. Can Write Poetry, but It Struggles With Math.}”~\cite{nyt-ai-poetry-2024}. General Large Language Models (LLMs) are indeed excellent at manipulating text but can be poor at logical reasoning, which is at the heart of mathematics and computing. This work aims to investigate experimentally whether that is an inherent limitation.

LLM-based reasoning falls into two broad categories. In neuro-symbolic systems such as AlphaProof~\cite{alpha-proof-2024}, the LLM guides a conventional proof-search engine, replacing hard-coded heuristics or hints provided by humans. In the second category, which our work belongs to, are LLMs that search for proofs directly. Chain-of-Thought (CoT)~\cite{DBLP:conf/nips/Wei0SBIXCLZ22} is one such system. 

The thought-chain produced by a CoT method is expressed in natural language. While easier for humans to understand, a natural-language proof cannot be automatically validated precisely, due to ambiguities inherent in natural language. 



In order to systematically investigate the reasoning capabilities of LLMs, it is simpler and more effective to express input assertions and output proofs in formal language. A ``reasoning'' LLM (Fig.~\ref{fig:l3m_pipeline}) receives as input a set of assumptions and a goal assertion, both expressed in a formal logic. As output, the LLM exhibits a formal proof that explains how the goal follows logically from the assumptions. That is, an LLM acts as a proof generator. 

There are good reasons to believe that LLMs can act as proof generators and good reasons to believe that they cannot. On the positive side, LLMs are Turing-complete, assuming infinite precision arithmetic~\cite{DBLP:journals/corr/abs-1901-03429,DBLP:conf/conll/BhattamishraPG20,DBLP:journals/jmlr/PerezBM21,DBLP:journals/tacl/StroblMW0A24}. An LLM scans its input and past output to determine its next output symbol, using attention mechanisms to focus on the relevant portions. This is similar to how, in constructing a mathematical proof, humans select and combine past deductions to form new conclusions. Few-shot learning suggests that LLMs can recognize patterns and follow general rules. On the negative side, Turing completeness relies on unbounded integers (to encode an unbounded Turing machine tape); but in reality, machine integers have bounded precision. LLMs have bounded context, no auxiliary memory, and no backtracking ability; hence, colloquially, LLMs may ``forget'' the distant past. This is an issue for proof construction, as an assumption may not be required until a late stage of proof. LLMs operate statistically and may thus hallucinate and produce incorrect proofs even if the claim is valid. Finally, logical reasoning has a inherently high worst-case complexity: it is co-NP-complete even for Boolean logic, with rapidly increasing difficulty (e.g., RE-complete for first-order logic).

To precisely formulate and test reasoning ability, we focus on reasoning in Boolean logic. In our view, there are several factors that make this a good choice. Boolean logic is the basis for other logics (e.g., first order, modal, temporal, and the like). Therefore, if Boolean reasoning turns out to be a challenging task for LLMs, it is unlikely that LLMs can do better on more sophisticated reasoning. Boolean logic has a simple proof system that makes it easy to automatically validate proofs. Semantically, Boolean reasoning is non-trivial, being co-NP-complete. (The dual question of Boolean satisfiability is the canonical NP-complete problem.)

In a strong sense, therefore, Boolean reasoning can be viewed as the \emph{C.elegans}~\cite{brenner-nobel-2002} of LLM reasoning: structurally simple, semantically non-trivial, and exhibiting "in the small" the central difficulties of general reasoning. 

The central challenges are (1) a scarcity of real-world training data, i.e., valid proofs, and (2) a means of determining whether a black-box model is reasoning rather than memorizing or guessing. We address these critical questions (1) by giving an efficient randomized algorithm for generating a large and diverse corpus of logically valid proofs for training and testing, and (2) by designing meaningful, automated, black-box tests of reasoning ability. 
The precision of formal language and the ability to automatically validate candidate proofs simplifies both tasks, enabling them to be carried out fully automatically, efficiently, and at scale. 



Our experiments show that Llama models can be trained to reason well in Boolean logic through a combination of synthetic proof generation and proof augmentation. Using only 9000 synthetically generated proofs as training data, our fine-tuned Llama-8B model reaches 98\% accuracy on depth-7 proofs, outperforming GPT-4o's few-shot learning performance (70\% accuracy), despite having far fewer parameters. Our ablation studies reveal two key findings: larger model capacity enables better generalization to deeper proofs, while proof augmentation  enhances the models' ability to handle complex instances, particularly benefiting the smaller Llama-1B model. 

In our view, this work makes several contributions. One is the recognition that training an LLM over a formal logic has distinct advantages: synthetic data can be generated at large scale and variety for training, and reasoning ability can be evaluated robustly with automated tests that are generated and run at scale. The second is the choice of Boolean logic to study reasoning ability. If LLMs fail at Boolean logic, it is unlikely that they will succeed at more complex reasoning. The third is a technical contribution of template proof transformations, which produces syntactically varied proofs that share a common structure. In training, this forces the LLM to focus on abstract proof patterns rather than on concrete tokens, improving its generalization ability. A fourth contribution is the variety of proposed tests that can be applied to determine whether a black-box LLM is actually reasoning vs. guessing or retrieving answers.

\section{Background}\label{sec:background}




In this section, we define the syntax and semantics of Boolean logic, introduce the Hilbert proof system, and forward and backward proof-construction mechanisms.

\subsection{Boolean Logic: Syntax and Semantics}

Boolean formulas (also called assertions or expressions) are formed by combining Boolean variables that represent atomic propositions using familiar combinators, such as ``and'' ($\Band$), ``or'' ($\Bor)$, ``not'' ($\Bnot$), and ``implies'' ($\Bimp$). The Boolean domain is denoted $\Bool$. 

Given a countable set $X$ of Boolean variables (also called propositions), a Boolean formula over $X$ is formed by the minimal grammar $f := p \in X \st \Bnot f \st f \Band f$. 
Disjunction is defined by $f \Bor g = \Bnot( (\Bnot f) \Band (\Bnot g))$, and implication by $f \Bimp g = (\Bnot f) \Bor g$.

An \emph{assignment} is a function from $X$ to $\Bool$; i.e., it provides a Boolean value for each variable. The \emph{satisfaction} of a formula by an assignment $\pi$ is a relation $\models$ defined recursively by: $\pi \models p$ iff $\pi(p)$ is true; $\pi \models \Bnot f$ iff it is not the case that $\pi \models f$; and $\pi \models f \Band g$ iff $\pi \models f$ and $\pi \models g$.

For a Boolean formula, the \emph{satisfiability} question is whether there is a assignment satisfying the formula. This is the classic NP-complete question. The dual \emph{validity} question is whether \emph{all} assignments satisfy the formula. This is the canonical co-NP-complete question.

\subsection{Hilbert Proofs}

An instance of a Boolean reasoning problem is given by a (possibly empty) set of assumption formulas and a goal formula. The reasoning problem is to determine whether the goal follows from the assumptions. A proof is a step-by-step explanation that justifies a "yes" answer.  

There are several proof systems for Boolean logic. In this work, we use the well known Hilbert style of proof system~\cite{mendelson-book-2024}. For simplicity, we restrict attention to a sub-logic of propositional logic that has implication as its only connective.\footnote{In this sub-logic, every formula is satisfiable, but validity (i.e., reasoning) is still co-NP-hard (Appendix \ref{appendix:proofs}).} The Hilbert-style proof system for this logic consists of two axioms and one inference rule, called Modus Ponens (MP).

The axiom schemas are the following. 
\begin{align*}
  & \mathit{A1::} \quad (A \Bimp (B \Bimp A)) \\
  & \mathit{A2::} \quad (A \Bimp (B \Bimp C)) \Bimp ((A \Bimp B) \Bimp (A \Bimp C)) 
\end{align*}
Colloquially, the first axiom says that if both $A$ and $B$ hold, then $A$ holds. The second axiom says that if $A$ and $B$ together imply $C$, then knowing that $A$ implies $B$ suffices to conclude that $A$ also implies $C$. 

An \emph{instance} of an axiom schema is obtained by providing a substitution that specifies formulas for the placeholders $A,B,C$. 
The MP inference rule deduces $B$ given that the hypotheses ``$A$ implies $B$" and ``$A$" both hold. 

Formally, a \emph{proof} is a sequence of steps where each step consists of a claimed formula along with a \emph{justification}. A justification for step $k$ of the proof can be of three kinds: (1) matching the $k$'th formula to an assumption; (2) matching the $k$'th formula to an axiom instance with a substitution $\sigma$, which is correct if the $k$'th formula is the result of applying $\sigma$ to the named axiom; 
or (3) MP applied to steps $m,n$ of the proof, which is correct if, denoting the $k$'th formula by $g$, for some formula $f$, the $m$'th formula is $(f \Bimp g)$, and the $n$'th formula is $f$. Both $m,n$ must be strictly smaller than $k$ to avoid circular reasoning. 

A proof is \emph{valid} if every justification in the proof is correct and the goal formula is claimed at some proof step. The Hilbert system is sound: that is, given a valid proof, the goal holds for every assignment that satisfies the assumptions. 



\subsubsection{Proof Example [Transitivity]} Given assumptions $p \Bimp q$ and $q \Bimp r$, does the goal $p \Bimp r$ hold? This is just transitivity, which is true, as can be established by the following Hilbert proof:
\begin{enumerate}
\item $(q \Bimp r) \Bimp (p \Bimp (q \Bimp r))$, by Axiom A1 with substitution $A=q \Bimp r, B=p$.  
\item $q \Bimp r$, by Assumption $2$.   
\item $p \Bimp (q \Bimp r)$, by MP on steps $1,2$.
\item $(p \Bimp (q \Bimp r)) \Bimp ((p \Bimp q) \Bimp (p \Bimp r))$, by Axiom A2 with substitution $A=p, B=q, C=r$. 
\item $(p \Bimp q) \Bimp (p \Bimp r)$, by MP on steps $4,3$.
\item $p \Bimp q$, by Assumption $1$.
\item $p \Bimp r$, by MP on steps $5,6$. 
\end{enumerate}

Proofs in the Hilbert-style proof system have a well-deserved reputation for being difficult to follow for humans. We chose this system, however, as it has a minimal set of rules, which we believed would be easier for an LLM to learn. Other standard proof systems, such as natural deduction, produce proofs that are easier to follow~\cite{sep-natural-deduction} but have multiple inference rules. 

\subsubsection{Proof Construction}
There are two main mechanisms of proof construction. 

\emph{Forward proof construction} maintains a set of deduced formulas; initially, this set contains the supplied hypotheses. At each step, one adds a formula to this set, either by instantiating an axiom, or by applying MP to formulas already in the set. Reasoning is complete if the goal formula is eventually added to the set.

 \emph{Backwards} or \emph{Goal-directed} proof construction maintains a proof tree. Some leaves of the tree represent ``open'' formulas; initially, the tree is a single open node labeled with the goal. At a  backward step, one ``closes'' an open leaf node $n$ labeled with a formula $g$ in one of three ways: by recognizing $g$ as a hypothesis, or by recognizing $g$ as an axiom instance, or by introducing a formula $f$ and asserting that $g$ follows from $f$ and $(f \Bimp g)$ by MP -- these formulas label new open leaf nodes that are added as children of $n$. Reasoning is complete when all leaves are closed.

\section{Training LLMs to Reason}

A central challenge in training an LLM to reason with formal proofs is the shortage of data. Proofs are found in textbooks, or in proof repositories for automated theorem provers. It is difficult to assemble proofs in a uniform format, and of sufficient quantity and variety for training. 

We describe a simple and efficient randomized backward proof generation process, which synthesizes valid proofs of arbitrary size and depth. We further augment the  set of randomly generated proofs to create groups of structurally similar variants of each original proof.

Augmentation is useful in training, to ensure that the LLM focuses less on the syntax of a proof and more on the underlying logical reasoning -- in essence, we would like the LLM to learn  \emph{proof patterns} rather than memorize specific proofs. Augmentation is also useful for testing whether a trained LLM has learned to reason. 


\subsection{Proof Generation}

The proof generation process is, in essence, a randomized form of the goal-directed proof construction process. Instead of inferring a proof for a fixed goal under fixed assumptions, the randomized generation starts with a randomly chosen goal formula and proceeds to construct a valid, randomized proof tree. 


\begin{enumerate}
\item The root of the tree is labeled with a goal formula, say $g$, that is chosen at random.
  
\item At an leaf node labeled with a  formula $h$, the process decides at random whether to do one of the following. 
  \begin{itemize}
  \item \label{gen:stop} Stop expansion at this node. The open formula $h$ turns into an assumption used to prove $g$.
  \item Choose a formula $f$ at random and expand using a MP justification into two new subgoal nodes labeled with formulas $f$ and $f \Bimp h$.
  \item If $h$ is an axiom instance, close this node with an axiom justification.
  \end{itemize}
  
\item The process continues until the proof tree reaches a specified limit on size or depth.
\end{enumerate}

A proof tree generated in this way is turned into a valid proof by viewing all formulas of stopped open nodes (case~\ref{gen:stop}) as assumptions required to prove the goal $g$. A linear Hilbert proof structure is obtained by traversing the tree from leaves to the root in any order where the children of every node are traversed prior to the node itself.

Goal-directed proof generation is efficient for a Hilbert-style system as the generation process determines an appropriate resolution for every open formula locally, e.g., by randomizing the application of MP, in time that \emph{does not depend} on the  size of the current proof tree. In contrast, a forward generation process must search at each step for already-deduced formulas that match the shapes $f\Bimp g$ and $f$ to apply the MP rule. A na\"ive algorithm requires quadratic time in the number of deduced formulas, which can be reduced to near-linear time using hashing. Thus, goal-directed proof generation has a significant efficiency advantage, which extends to proof-generation in other systems such as natural deduction. 




\subsection{Proof Augmentation by Template Transformation}\label{sec:templateT}

The randomized proof-generation algorithm produces valid proofs, but does not  guarantee that many examples with same or similar proof structure are generated. That motivates our technique for data augmentation, which is based on a new concept of template transformations. 

This technique is based on the key observation that a proof in Hilbert form may also be viewed as a proof schema. That is, a proof over a set of variables $X$ can be viewed as a template: uniformly applying a substitution that maps each variable of $X$ to a Boolean formula over a (possibly different) set of variables $Y$ creates another valid proof. 

For instance, the reasoning example of Section~\ref{sec:background} is defined over variables $p,q,r$. Say we uniformly replace   $q$ with $y_1$, and $p$ with $y_1 \Bimp y_2$, and $r$ with $y_2 \Bimp y_1$. This replacement creates a valid proof showing that the new goal $(y_1 \Bimp y_2) \Bimp (y_2 \Bimp y_1)$ (transformed from the original goal $p \Bimp r$) follows by transitivity from the transformed assumptions $(y_1 \Bimp y_2) \Bimp y_1 $ and $y_1 \Bimp (y_2 \Bimp y_1)$. 

In general, to transform a valid reasoning instance defined over a set $X$ of variables, we (1) choose a substitution $\sigma$ \emph{at random} that maps each variable in $X$ to a randomly chosen Boolean expression over a randomly chosen variable set $Y$ (not necessarily different from $X$), (2) systematically replace each formula $f$ that occurs in the proof with the formula $\sigma(f)$ that is obtained by applying the substitution $\sigma$ to $f$, and (3) systematically apply the substitution $\sigma$ to the assumption and goal formulas. 

It is easy to show by induction on proof length that the transformed reasoning instance is also valid. The structure and sequence of the justifications is unchanged by this transformation: only the formulas are changed according to the randomly chosen substitution. Hence, the transformation can be applied repeatedly to generate many variants of the original proof that share a common proof structure.  This construction is used to train the LLM model and (as discussed in Section~\ref{sec:testing}) also used to test the reasoning ability of a trained LLM model.

The intuition is that, during training, supplying similar instances obtained through template transformation forces the LLM to focus on the abstract proof structure  rather than on concrete detail such as variable names. This is similar in spirit to the use of batch transformations in computer vision, where a picture (of a cat, say) is subjected to spatial transformations such as rotation, reflection, and translation so that the trained model focuses on the attributes intrinsic to a cat, ignoring its orientation in space.

\section{Evaluation Framework}\label{sec:testing}
In this section, we present evaluation metrics to assess the reasoning capabilities of LLMs and demonstrate the effectiveness of our training and evaluation methods.


Existing LLM evaluation metrics focus on different aspects of model output, such as quality, relevancy, factual correctness, bias, and toxicity, but there is no standardized approach for assessing  reasoning capabilities. Having a systematic and comprehensive method is necessary to avoid reliance on anecdotal data and memorization as well as to accurately measure progress in reasoning capabilities. 

We propose an evaluation process with four clearly defined and measurable metrics. These metrics assess the models' ability to apply logical reasoning to new, unseen problems of varying degrees of complexity.

\subsection{Favoring Semantics over Syntax} 
An LLM should demonstrate the ability to generalize beyond the specific instances in its training data. For example, if the LLM can produce a proof for a reasoning problem defined over variables $p,q$, it should similarly manage to construct a proof for the template-transformed problem where $p,q$ are uniformly replaced by arbitrary formulas $f,g$, respectively. This indicates that the LLM recognizes and abstracts the underlying logic of the problem, disregarding superficial syntactic differences such as variable names or structural variances. 

\subsection{Generalizing to Deeper Proofs}
A reasoning LLM should discover proofs that are longer than those present in its training set. This suggests that the LLM is not merely memorizing proofs up to a certain complexity but exhibiting deeper logical reasoning. The evaluation of this metric involves measuring model accuracy with increasing depth of reasoning, highlighting its ability to navigate and resolve more complex proof structures.

\subsection{Handling Wider Expressions}
An LLM should prove statements that incorporate a larger set of variables than those found in its training dataset. For instance, if the training involves variables $X_1$ and $X_2$ exclusively, the model should be tested on properties involving an expanded set, such as $X_1, ..., X_n$. Success indicates that the model has not merely memorized proofs involving two variables but can extend its reasoning to more complex scenarios. Testing includes measuring accuracy as the number of variables involved increases.

\subsection{Exhibiting Diversity in Proofs}
A logical statement can have several valid proofs. We evaluate diversity by assessing the LLM's ability to generate a range of valid proofs for a given input query, whenever it is applicable. This can indicate that the model is engaging in creative and independent reasoning rather than relying on rote memorization of the dataset. To support this metric, we use a symbolic proof checker to \emph{semantically} evaluate proof validity rather than relying on \emph{syntactic} token-level equality. 

\section{Experimental Analysis}
\definecolor{BlueGrad1}{HTML}{7791BD}
\definecolor{mycolor1}{HTML}{8DD3C7}
\definecolor{mycolor2}{HTML}{5470A3}
\definecolor{mycolor3}{HTML}{BEBADA}
\definecolor{mycolor4}{HTML}{FB8072}
\definecolor{mycolor5}{HTML}{80B1D3}
\definecolor{mycolor6}{HTML}{FDB462}
\definecolor{mycolor7}{HTML}{B3DE69}
\definecolor{mycolor8}{HTML}{FCCDE5}
\definecolor{mycolor9}{HTML}{6580B0}
\definecolor{mycolor10}{HTML}{BC80BD}
\definecolor{mycolor11}{HTML}{CCEBC5}
\definecolor{mycolor12}{HTML}{FFED6F}

\begin{table*}[h]
\caption{Proof Generation Accuracy on In-Distribution Test Data: Comparison between fine-tuned Llama models (trained on 9000 examples), GPT-4o few-shot learning (averaged across 1,3,6,9-shot learning settings), and pre-trained baseline models. Testing performed on proof depths ($d$) matching training distribution. w/ TT and w/o TT denote models fine-tuned with and without Template Transformation respectively.}
\label{tab:comparison}
\centering
\begin{tabular*}{0.9\textwidth}{@{\extracolsep{\fill}}lccc@{}}
\toprule
& \multicolumn{3}{c}{\textbf{Accuracy by Proof Depth}} \\
\textbf{Model} & \textbf{$d=7$} & \textbf{$d=10$} & \textbf{$d=13$} \\
\midrule
\multicolumn{4}{l}{\textit{SOTA Performance}} \\
GPT-4o (X-shot) & 0.67±0.08 & 0.48±0.08 & 0.26±0.04 \\
\midrule
\multicolumn{4}{l}{\textit{Pre-trained Base Models}} \\
Llama-3.2-8B (X-shot) & $\approx$ 0.00 & $\approx$ 0.00 & $\approx$ 0.00\\
Llama-3.1-1B (X-shot) & $\approx$ 0.00 & $\approx$ 0.00 & $\approx$ 0.00\\

\midrule
\multicolumn{4}{l}{\textit{Our Approach - Small Model (Llama-3.1-1B)}} \\
Fine-tuned w/o TT & 0.52 & 0.30 & 0.02 \\
Fine-tuned w/ TT & 0.54 & 0.30 & 0.04 \\
\midrule
\multicolumn{4}{l}{\textit{Our Approach - Large Model (Llama-3.2-8B)}} \\
Fine-tuned w/o TT &  0.78 & 0.40 & 0.28  \\
Fine-tuned w/ TT & \textbf{0.98} & \textbf{0.66} & \textbf{0.32} \\
\bottomrule
\\
\end{tabular*}
\end{table*}
\subsection{Experimental Setup}

\begin{figure}[t]
    \centering
\includegraphics[width=\columnwidth]{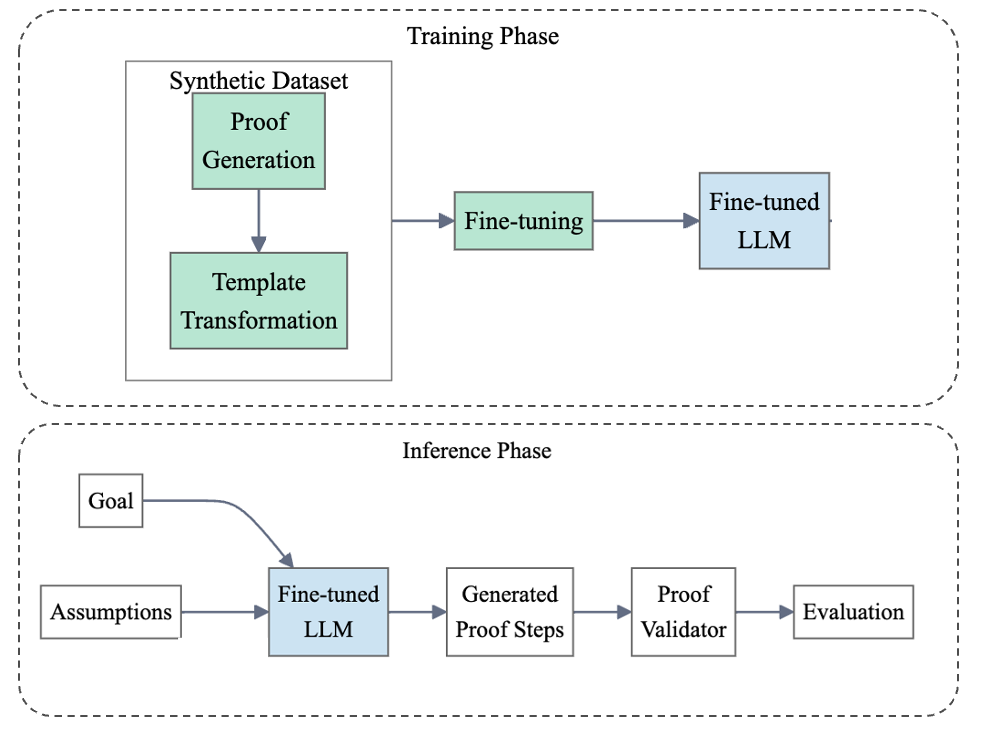}
    \caption{Overview: The training phase generates synthetic proofs and applies template transformations for fine-tuning. Inference produces a candidate proof for the query (``Does the goal follow from the assumptions?"); this proof is formally validated.}
    \label{fig:l3m_pipeline}
\end{figure}
\paragraph{Training} We fine-tune two Llama3 models (8B and 1B parameters) \cite{dubey2024llama} on our synthetically generated Hilbert proofs. We use 4-bit quantization to optimize memory usage. The fine-tuning process employs Low-Rank Adaptation (LoRA) \cite{hu2021lora} to efficiently update model parameters without modifying the entire network. Training convergence is monitored using exact match accuracy with the ground-truth proofs. We also use early stopping (patience = 5 epochs) to prevent overfitting.

For all experiments, the training data consists of a balanced number of proofs with depths 7, 10, and 13. During training, Template Transformation is applied with a probability $\alpha_{TT} =0.7$ during batch preprocessing. Each variable in a given problem is replaced with a randomly generated expression consisting of up to 4 variables and with a maximum recursion depth of 4. Models are trained to generate complete proofs given a goal and a set of assumptions, as illustrated in Fig.~\ref{fig:l3m_pipeline}. 

\textbf{Evaluation} We generate separate proof trees for training and evaluation to prevent data leakage.  To obtain the evaluation dataset, we construct a few distinct proof trees and ensure all formulas $f$ are unique without duplicates occurring in the training dataset (Section ~\ref{sec:templateT}). From these trees, we extract both the validation and testing sets. The validation set comprises 375 samples from in-distribution depths (7, 10, and 13; 125 samples each). 

For generalization analysis, we use 450 test samples across nine depths (4-28; 50 samples per depth) and 400 samples across four widths of increasing variable complexity (100 samples per width). We ensure unique formulas across all partitions.
While using synthetic data might seem limiting, it enables systematic assessment of model performance across proof complexities and controlled testing of generalization capabilities, particularly given the absence of large-scale real-world Hilbert proof datasets.

To evaluate our models, we report the testing accuracy defined as the ratio of correctly generated proofs to the total number of test problems. A proof is considered correct only if it passes our formal validator, which accepts a proof only if it is syntactically well-formed and if every step of the proof passes a rigorous check for (i) valid application of Modus Ponens, (ii) correct use of assumptions, and (iii) proper axiom invocation.

\textbf{Experimental Design} Our experimental analysis is twofold: First, we establish baseline performance by comparing our fine-tuned models against GPT-4o \cite{gpt4} and pre-trained Llama3 models on in-distribution proof depths. Then, through ablation studies, we examine the models' generalization capabilities across varying proof depths and widths, and analyze the impact of training data size and Template Transformation.

\textbf{Key Findings} Our experiments reveal the effectiveness of fine-tuning Llama3 models (8B and 1B) for Hilbert proof generation using our proposed synthetic data and proof augmentation. The 8B model achieves 98\% accuracy on depth-7 proofs, outperforming GPT-4o's few-shot performance (67\%±8 averaged across 1,3,6,9-shot settings) despite having far fewer parameters. Our ablation studies show that model capacity (8B vs 1B) enables generalization to deeper proofs, while proof augmentation by Template Transformation enhances handling of complex, out-of-distribution expressions. Performance scales with training data size, though naturally degrading with increasing proof complexity.

\subsection{Baseline Evaluation}

We evaluate our approach against GPT-4o (100B+ parameters) and compare against our base models: Llama-3.2 (8B) and Llama-3.1 (1B). As shown in Table~\ref{tab:comparison}, testing is performed on proof depths matching the training distribution \{7,10,13\}. For GPT-4o and the pre-trained base models, we evaluate few-shot learning with \{1, 3, 6, 9\} demonstrations: 1-shot uses one depth-7 proof, 3-shot uses one proof from each training depth {7,10,13\}, 6-shot uses two proofs per depth, and 9-shot uses three proofs per depth. We provide some examples in Appendix ~\ref{sec:appendix2}.
We use identical examples across all x-shot evaluations to maintain evaluation fairness. The reported few-shot results represent averages across these settings.

GPT-4o achieves an average accuracy of 0.67±0.08 on depth-7 proofs, with performance degrading to 0.48±0.08 and 0.26±0.04 at depths 10 and 13. 
The pre-trained Llama3 base models fail to generate valid proofs ($\approx 0.00$ accuracy) when tested directly on our task, likely due to both their smaller model sizes compared to GPT-4o ($>100$B parameters) and the challenge of inferring strict syntactic rules of Hilbert proofs from few examples. 

Our fine-tuned models show substantial improvements over both GPT-4o and their pre-trained versions. The large model (8B) with Template Transformation achieves 0.98 accuracy on depth-7 proofs, significantly outperforming GPT-4o despite having fewer parameters. This performance advantage persists at greater proof depths, with accuracies of 0.66 and 0.32 for depths 10 and 13 respectively. Template Transformation proves particularly effective for the large model, improving accuracy by 20-25\% mainly for shorter depths compared to standard fine-tuning.

The small model (1B) shows more modest improvements, achieving 0.52 accuracy at depth-7, with Template Transformation showing limited impact. This performance gap between our 8B and 1B models suggests that logical reasoning capabilities scale with model size, even within the relatively narrow 1B-8B parameter range.

\subsection{Ablation Study}
We conduct ablation studies to systematically analyze how different factors affect our fine-tuned LLMs' performance, and to understand the limits of our proposed approach.

\begin{figure}[h]
    \centering
    \includegraphics[width=\linewidth]{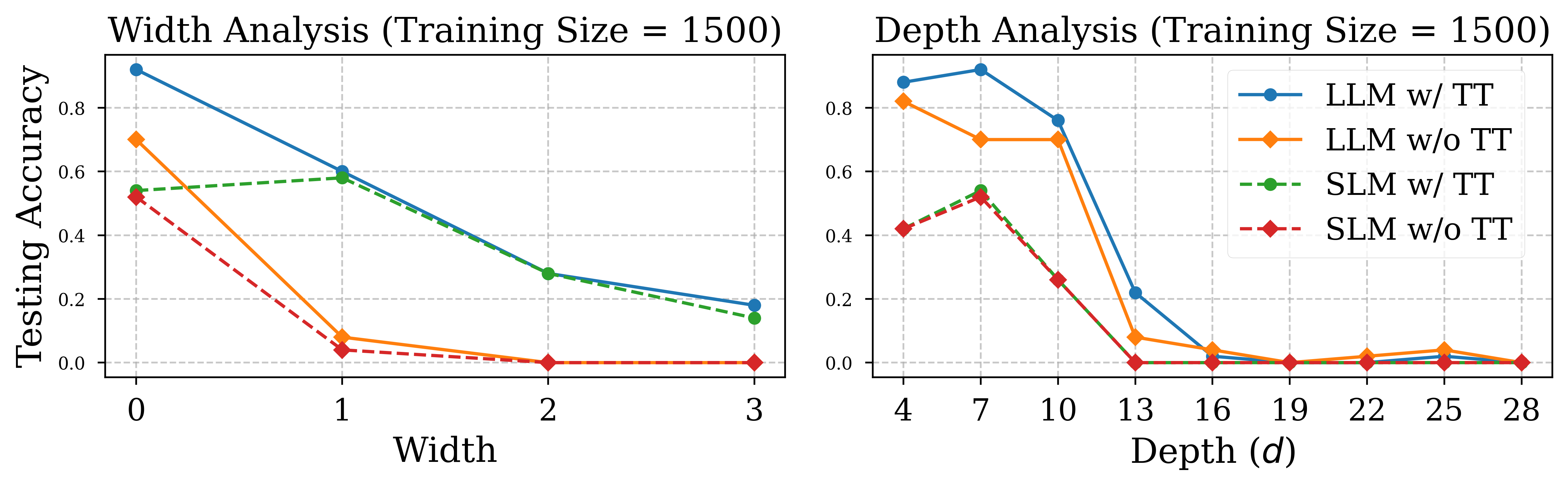}
    \includegraphics[width=\linewidth]{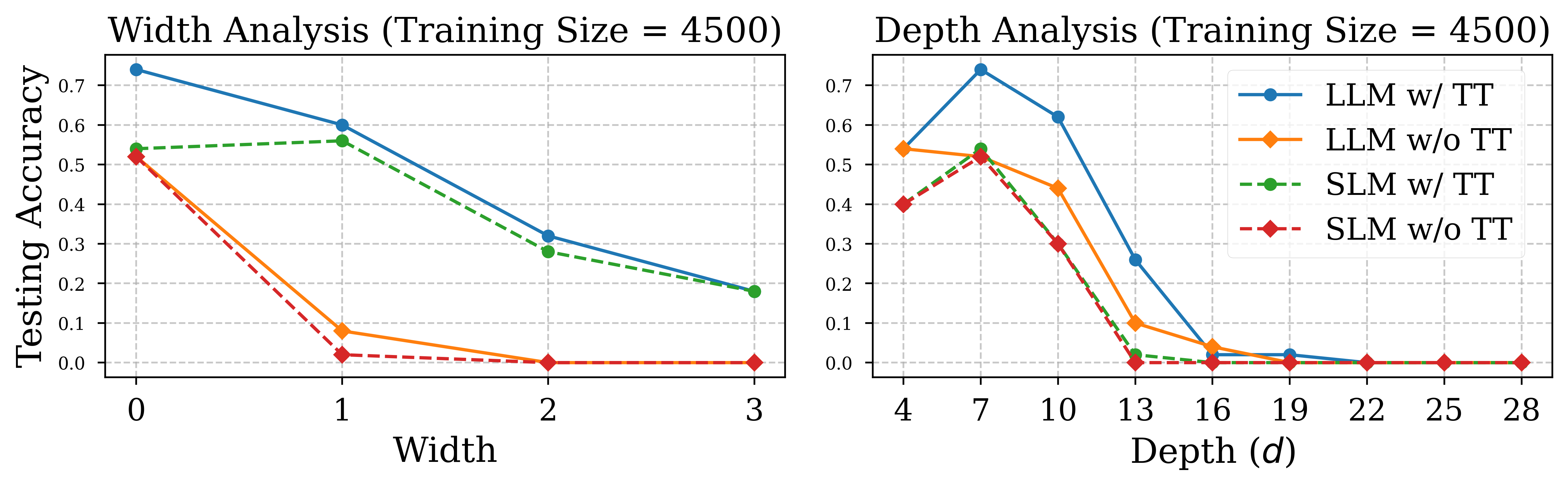}
    \includegraphics[width=\linewidth]{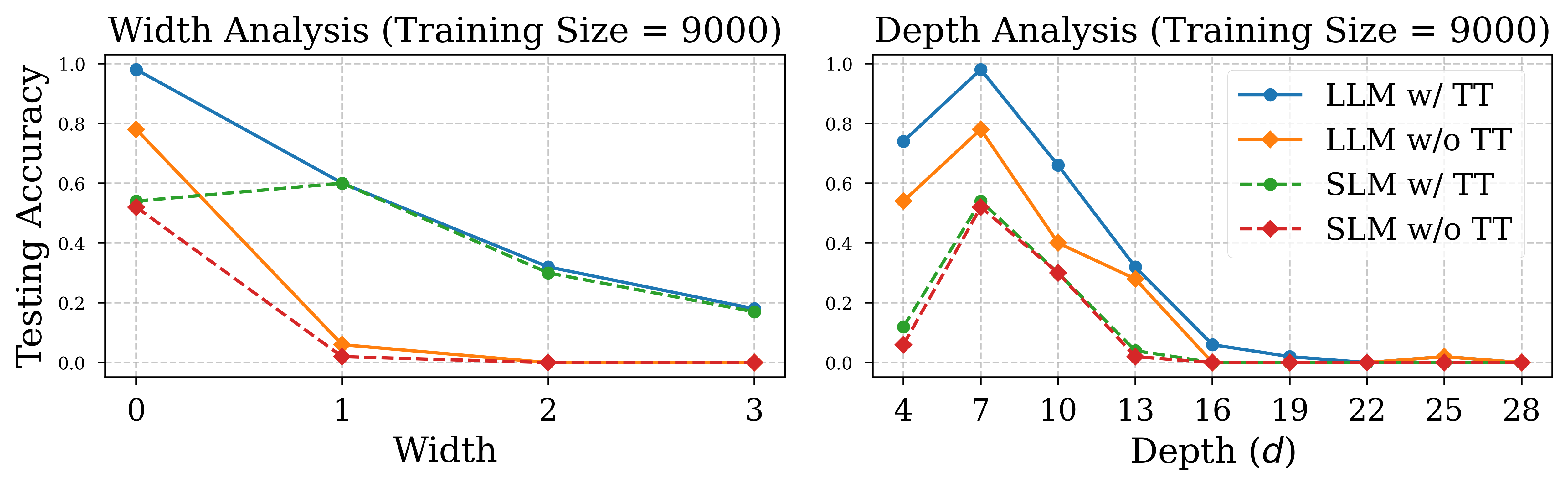}
   \caption{Comparison of Llama3-8B (LLM) and Llama3-1B (SLM) model variants' testing accuracies across different training data sizes. Left: Impact of proof width on accuracy. Right: Impact of proof depth on accuracy. Models with and without Template Transformation (TT) are compared. }
    \label{fig:analysis}
\end{figure}

\subsubsection{Impact of Proof Depth}
To evaluate reasoning depth generalization, we test models on nine proof depths (4-28 steps, 50 samples per depth) using fixed-width expressions. Each proof in the test set is generated by systematically reducing deeper proofs: identifying the first modus ponens application (i.e., one closest to the leaf nodes) and replacing it with the deduced statement, which is treated as a new assumption, to reduce depth by one step.

As shown in Fig.~\ref{fig:analysis}, performance degrades with increasing proof complexity, but larger models show better resilience. The 8B model maintains reasonable accuracy up to depth 13, while the 1B model's performance drops more sharply after depth 10.

\subsubsection{Impact of Proof Width}
We analyze width generalization using four levels of expression complexity (100 samples per width). Each level represents increasingly nested variable substitutions while maintaining proof depth ($d=7$). A width-0 proof uses simple variables (e.g., $P \Rightarrow Q$), while higher widths introduce nested implications (e.g., $(P \Rightarrow Q) \Rightarrow R$ for width-1).

Results in Fig.~\ref{fig:analysis} show that Template Transformation significantly improves handling of complex expressions for both the 8B and 1B models.  Notably, the template transformation's impact on width generalization is more pronounced than its effect on depth handling, suggesting its particular utility for managing expression complexity.


\subsubsection{Impact of Model Size and Training Data}
We compare Llama-8B and Llama-1B which we denote as LLM and SLM respectively, reflecting their relative model capacities. We train three instances of each model using synthetic  training sets of three sizes: 1500, 4500, and 9000 examples. Each set is balanced across proof depths 7, 10, and 13. 

Fig.~\ref{fig:analysis} summarizes the obtained results. While the LLM generally outperforms the SLM, particularly on deeper proofs, we observe an interesting pattern with proof width: when using Template Transformation, the SLM achieves comparable performance to the LLM on wider expressions ($width\geq 2$), suggesting that our data augmentation technique effectively compensates for smaller model capacity in handling complex expressions. The LLM, however, shows better data efficiency overall, achieving stronger performance with smaller training sets, especially on depth generalization.

\subsubsection{Impact of Template Transformation}
We analyze how the probability of applying Template Transformation ($\alpha_{TT}$) during the training process impacts the models' reasoning capability. We vary $\alpha_{TT}$ from 0 to 0.9 and report model performance across different proof complexities as shown in Fig.~\ref{fig:tt_accuracy}. 
\begin{figure}[h]
    \centering
\includegraphics[width=\columnwidth]{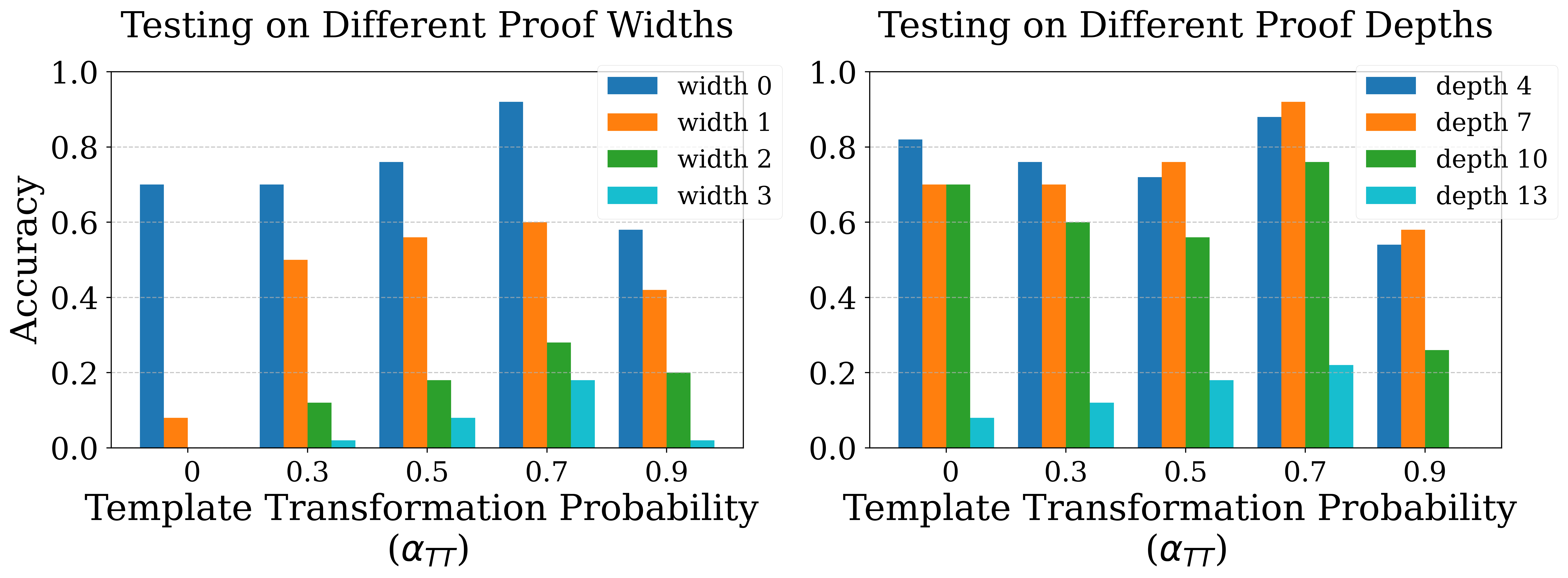}
    \caption{Impact of Template Transformation probability ($\alpha_{TT}$) on model performance. The left subplot shows test accuracy across different model widths (0-3), while the right subplot demonstrates the effect on models of varying depths (4-13).}
    \label{fig:tt_accuracy}
\end{figure}

The results reveal a consistent pattern for proof width: performance improves as $\alpha_{TT}$ increases up to 0.7, beyond which accuracy declines sharply. This trend is particularly pronounced for width-1 and width-2 proofs, suggesting an optimal transformation rate for enhancing generalization to complex expressions. For proof depth, the impact varies: while deeper proofs (depth 13) show similar sensitivity to $\alpha_{TT}$, shallower proofs exhibit more stable performance across different transformation rates. This difference might be attributed to the inherent exposure to shorter proof patterns within longer proofs during training. Setting $\alpha_{TT}=0.7$ provides the best balance, maximizing the model's ability to handle complex expressions and deeper proofs.


\section{Related Work}\label{sec:relwork}

LLM-based proof generation methods fall into two broad categories.
In neuro-symbolic systems such as AlphaProof~\cite{alpha-proof-2024},
AlphaGeometry~\cite{DBLP:journals/nature/TrinhWLHL24}, Leandojo~\cite{leandojo}, and GPT-f~\cite{DBLP:journals/corr/abs-2009-03393}, the LLM component generates hints that guide a conventionally programmed symbolic proof-search engine (hints such as “use this theorem," “split  into these cases”). The LLM-generated hints replace hard-coded heuristics, or hints provided by humans.

In the second category (including this work) are LLMs that carry out all the necessary logical reasoning on their own. Such systems include Chain-of-Thought (CoT)~\cite{DBLP:conf/nips/Wei0SBIXCLZ22} and Tree-of-Thought (ToT)~\cite{DBLP:conf/nips/YaoYZS00N23}. 
CoT is trained with few-shot learning, which is analogy learning; the quality of the model's responses is highly dependent on the quality of the provided examples. 
ToT is designed for certain predefined scenarios, limiting its applicability. Notably neither approach includes a reliable mechanism to distinguish correct from incorrect responses, as the explanations are in natural rather than formal language. That motivates our use of formal logics. 

Recent work on LLM reasoning~\cite{DBLP:conf/icml/MorishitaMYS23,morashita-neurips-2024} is also based on Boolean logic, but only as an intermediate step: LLMs are trained on natural language translations from synthesized Boolean logic proofs. (Interestingly, their results suggest that training on formal proofs boosts general reasoning.) Proof synthesis uses randomization, as we do, but the paper does not point out the key efficiency advantage of goal-directed synthesis. There is no analogue of template transformations, which our experiments show substantially boost reasoning ability. 

Several researchers have developed specialized benchmarks and datasets to  evaluate LLM reasoning across domains and task types~\cite{saparov2023testing, teng2023glore, parmar2024logicbench}. Prominent examples are LogicBench~\cite{parmar2024logicbench}, which encompasses 25 distinct reasoning patterns, and GLoRE~\cite{teng2023glore}, which comprises 12 datasets spanning three task types. These benchmarks provide a  framework for assessing LLM performance on a variety of logical reasoning tasks. Other studies have focused on specific aspects of logical reasoning, such as propositional and predicate logic, and categorical syllogisms, providing deeper insights into LLMs' performance on particular types of logical problems~\cite{wan2024logicasker, zong2024categorical}.   


Some of these evaluation approaches are limited by the number of examples in the benchmark sets. Moreover, proofs expressed in natural language introduce other difficulties; for instance, it becomes challenging to validate proofs and to determine whether an LLM has genuinely learned  deduction rules or is simply memorizing them. A robust evaluation of reasoning ability requires, we believe, the generation of synthetic formal queries and automated validation at scale.  



\section{Discussion and Conclusion}\label{sec:discussion}



We believe that our results indicate that LLMs can be taught to develop generalizable  capabilities for formal reasoning, up to some limits. Our experiments (Section \ref{sec:testing}) show that fine-tuning with a small synthetic proof data set boosts reasoning ability, surpassing the performance that significantly larger models such as GPT-4o attain with few-shot learning.
Matching intuition, template transformations considerably improve reasoning accuracy. 

The sharp drop in accuracy on the depth metric does not necessarily indicate an inability to reason.  In fact, such a decrease should be expected of any reasoning method given the co-NP hardness of the Boolean reasoning problem. These results support the observation that current techniques struggle to maintain reasoning performance for out-of-distribution depth instances (cf.~\cite{exploreDepthOOD}). 

While the proposed data generation and evaluation approaches demonstrate potential, significant challenges remain in developing models capable of accurate general-purpose reasoning. A key hurdle to overcome is the gap between real-world problems and specialized LLMs. Everyday human  reasoning ranges over multiple, diverse formal frameworks, including first-order  quantification such as ``for all" and ``there exists;"  temporal concepts like ``yesterday," ``tomorrow," ``eventually," and ``previously;" and modal notions like ``possible" and ``inevitable." General reasoning also relies on so-called \emph{commonsense} facts about our world  (e.g., a ball dropped from a height will bounce back). Methods that allow LLMs to  acquire, integrate, and reason with such  knowledge are currently limited~\cite{cyc, nvidiaCosmos}. 

These findings highlight the need for further research. 
A Boolean-reasoning LLM, as introduced in this paper, is not intended as a solution to the general reasoning question. One should rather view it as a convenient, precise, and fully automatable setting in which to experiment with approaches to the reasoning task. 

\section*{Impact Statement}


This paper presents work whose goal is to advance the field of 
Machine Learning. There are many potential societal consequences 
of our work, none which we feel must be specifically highlighted here.



\bibliography{bibliography,l3m-chapters/refs,l3m-chapters/refs2}
\bibliographystyle{icml2025}

\newpage
\appendix
\onecolumn
\section{Appendix}\label{sec:appendix}

\subsection{Implication-Only Sublogic}\label{appendix:proofs}

Consider an ``implication-only'' sublogic of propositional logic defined by the grammar:
\begin{align*}
  f := p \in X \st f \Bimp f
\end{align*}

Note that it is possible to express ``true'' as $(p \Bimp p)$, but it seems difficult to express ``false.'' That is not surprising when one realizes the following:

\begin{theorem}
  Every formula of the implication-only sublogic is satisfiable.
\end{theorem}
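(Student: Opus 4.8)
The plan is to exhibit a single assignment that satisfies \emph{every} formula of the implication-only sublogic simultaneously. The natural candidate is the assignment $\pi$ that sets every variable to true, i.e., $\pi(p) = \Btrue$ for all $p \in X$. I would then prove, by structural induction on the grammar $f := p \st f \Bimp f$, that $\pi \models f$ for every formula $f$ of the sublogic, which immediately establishes that $f$ is satisfiable (witnessed by $\pi$).

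For the base case, a formula is a single variable $p$; since $\pi(p)$ is true by construction, $\pi \models p$ holds directly from the definition of the satisfaction relation. For the inductive step, consider a formula of the form $f \Bimp g$. Recalling that $f \Bimp g$ abbreviates $(\Bnot f) \Bor g$, we have $\pi \models f \Bimp g$ iff it is not the case that $\pi \models f$, or $\pi \models g$. By the induction hypothesis applied to the subformula $g$, we already know $\pi \models g$, so the disjunction is satisfied irrespective of the truth value of $f$. Hence $\pi \models f \Bimp g$, and the induction closes.

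I expect no genuine obstacle here: the statement reduces to a routine structural induction once the right assignment is identified. The only conceptual point worth flagging is \emph{why} the all-true assignment works, namely that an implication is true whenever its consequent is true, and in this sublogic every compound formula is an implication whose consequent is itself a sublogic formula, hence true under $\pi$ by the induction hypothesis. An equivalent route would observe that these formulas are monotone in their variables and therefore true at the maximal (all-true) valuation, but I would favor the direct induction for transparency. This argument also makes precise the paper's informal remark that ``false'' is inexpressible: no implication-only formula can be a contradiction, since each is satisfied by the fixed assignment $\pi$.
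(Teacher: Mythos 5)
Your proof is correct and takes essentially the same route as the paper: a structural induction whose inductive step rests on the single key fact that an implication is satisfied whenever its consequent is. The only difference is that you fix the all-true assignment as a uniform witness up front (yielding the marginally stronger conclusion that one assignment satisfies \emph{every} formula of the sublogic), whereas the paper's induction hypothesis is merely existential, taking for $f \Bimp g$ whatever assignment witnesses the satisfiability of $g$.
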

\begin{proof}
  The proof is by structural induction. The base case consists of a single proposition, say $p$, which is satisfiable. Consider a formula of the shape $f \Bimp g$. By the inductive hypothesis, the formula $g$ is satisfiable; i.e., there is a assignment $\pi$ such that $\pi \models g$. Hence, $\pi \models f \Bimp g$.
\end{proof}

Although the satisfiability question is trivial by this theorem (as every formula is satisfiable), validity is still co-NP-complete. Those statements are not contradictory as the sublogic is \textbf{not} closed under negation.

\begin{theorem}
  The reasoning problem for implication-only sublogic is co-NP-complete.
\end{theorem}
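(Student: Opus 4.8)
The plan is to prove membership in co-NP and co-NP-hardness separately. Throughout I read ``the goal follows from the assumptions'' semantically, i.e. $\Gamma \models g$ means every assignment satisfying all formulas of $\Gamma$ also satisfies $g$; by soundness and completeness of the Hilbert system this coincides with provability, so the reading does not affect the decision problem. For membership in co-NP I would exhibit a short certificate for the \emph{complementary} question ``$g$ does not follow from $\Gamma$'': namely an assignment $\pi$ on the finitely many variables of $\Gamma \cup \{g\}$ with $\pi \models h$ for every $h \in \Gamma$ and $\pi \not\models g$. Since evaluating an implication-only formula under a fixed assignment is linear-time, the certificate is checkable in polynomial time, placing non-entailment in NP and hence the reasoning problem in co-NP.

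The substantive part is co-NP-hardness, which I would establish by a polynomial-time reduction from unsatisfiability of propositional CNF, a canonical co-NP-complete problem. The core difficulty is that the sublogic has neither negation nor $\Bfalse$, so I would introduce a \emph{fresh} variable $q$ to play the role of falsum and define a translation $T$ from arbitrary propositional formulas to implication-only formulas by $T(x) = x$, $T(\Bnot \alpha) = T(\alpha) \Bimp q$, $T(\alpha \Bor \beta) = (T(\alpha) \Bimp q) \Bimp T(\beta)$, and $T(\alpha \Band \beta) = (T(\alpha) \Bimp (T(\beta) \Bimp q)) \Bimp q$. Each connective incurs only constant blow-up, so $T(\phi)$ has size $O(|\phi|)$ and uses $\Bimp$ exclusively. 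Given a CNF instance $\phi$ over $x_1,\dots,x_n$, the reduction outputs the reasoning instance with the single assumption $T(\phi)$ and goal $q$.

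The key lemma, proved by a routine structural induction on $\phi$ using classical equivalences ($\alpha \Bor \beta \equiv \Bnot\alpha \Bimp \beta$, $\alpha \Band \beta \equiv \Bnot(\alpha \Bimp \Bnot\beta)$, and double-negation elimination), states: for every assignment $\pi$ with $\pi(q) = \Bfalse$, one has $\pi \models T(\phi)$ iff $\pi \models \phi$. Granting this, correctness follows: a counterexample to ``$q$ follows from $\{T(\phi)\}$'' is an assignment with $T(\phi)$ true and $q$ false, and the condition $q$ false forces, by the lemma, $T(\phi)$ to agree with $\phi$, so counterexamples correspond exactly to satisfying assignments of $\phi$. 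Hence $q$ follows from $\{T(\phi)\}$ iff $\phi$ is unsatisfiable, and the reduction is complete.

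The step I expect to be the main obstacle is ensuring faithfulness under the \emph{universal} quantification over all assignments, which unavoidably includes those setting $q = \Btrue$. The design choice that resolves this is to make $q$ itself the goal: on any assignment with $q = \Btrue$ the goal is already satisfied, so such assignments can never witness non-entailment and the behaviour of $T(\phi)$ there is irrelevant. All semantic content is thereby concentrated on the $q = \Bfalse$ assignments, where $T$ faithfully simulates $\phi$. Verifying the translation lemma, particularly the conjunction gadget that relies on classical double-negation, is the remaining technical check, but it is a standard induction rather than a genuine difficulty.
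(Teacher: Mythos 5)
Your proof is correct and follows essentially the same route as the paper: both establish co-NP membership via an assignment as a polynomial-time-checkable counterexample certificate, and co-NP-hardness by reducing from CNF-(un)satisfiability using a fresh variable as a surrogate for falsum, taking that variable as the goal, and proving a lemma that on assignments falsifying it the translation agrees with the original formula. The only cosmetic difference is that you package the whole CNF into a single assumption via a compositional translation with a double-negation gadget for conjunction, whereas the paper translates clause-by-clause and lets the assumption \emph{set} play the role of conjunction.
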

\begin{proof}
  In the reasoning problem, one is given a set of assumptions and a goal. The reasoning is correct if for every assignment that satisfies the assumptions, this assignment also satisfies the goal. Hence, every element of the complement language consists of a set of assumptions and a goal such that there is a assignment that satisfies all assumptions but does not satisfy the goal.

  A goal formula in the implication-only sublogic is either a proposition $p$ or has the structure $f \Bimp g$. A assignment $\pi$ does not satisfy $f \Bimp g$ if, and only if, $\pi$ satisfies $f$ but does not satisfy $g$. By induction, one obtains a collection of antecedent formulas (i.e., in the position of $f$) that are satisfied by $\pi$ and a final atomic proposition $p$ that is not. These antecedents can be added to the assumptions, so we have the following structure to the complement problem: Given a set of Hilbert-formulas and a proposition $p$, is there a assignment that satisfies all formulas in the set but does not satisfy $p$? 

  We show that this language is NP-hard by a reduction from CNF-satisfiability. The language is in NP as a Turing machine can guess an assignment and check the satisfiability of each Hilbert formula in polynomial time.

  Consider a CNF formula, which is a conjunction of clauses. A clause is a disjunction of literals, i.e., positive and negated propositions. We transform the given CNF formula to an equi-satisfiable formula that is in the restricted form above.

  We transform each clause to Hilbert form. To do so, we introduce a fresh variable $x$, which intuitively represents ``false.'' Notice that $q_0 \Bor q_1$ is equivalent to $(q_0 \Bimp \Bfalse) \Bimp q_1$, where $\Bfalse$ represents the constant ``false." We transform $q_0 \Bor q_1$ to $(q_0 \Bimp x) \Bimp q_1$ instead, and restrict valuations to those that do not satisfy $x$, which essentially treats $x$ as false.

  Consider any clause. Say that the positive literals in the clause are the propositions from set $Q=\{q_1,\ldots,q_n\}$ (which may be empty). The Hilbert formula for the disjunction of these propositions is constructed inductively as $g(0)=x$ and $g(i+1) = (q_{i+1} \Bimp x) \Bimp g(i)$. This has the property (easily shown by induction) that for any extended assignment $\pi$ where $\pi(x)=\Bfalse$, it is the case that $\pi$ satisfies $g(n)$ if, and only if, $\pi$ satisfies the disjunction of propositions in $Q$. 

  Notice that that $\Bnot p_1 \Bor \Bnot p_2 \Bor q$ is equivalent to the Hilbert formula $(p_2 \Bimp (p_1 \Bimp q))$. Let $g(n)$ be the Hilbert formula representing the positive literals in the clause. Let the negative literals in the clause be the propositions in set $P=\{p_1,\ldots,p_m\}$ (which may be empty). Construct Hilbert formula $h$ for the clause inductively as $h(0,n)=g(n)$ and $h(i+1,n) = p_{i+1} \Bimp h(i,n)$.

  From the properties above, we have that for any extended assignment $\pi$ where $\pi(x)=\Bfalse$, it is the case that $\pi$ satisfies the original clause if, and only if, $\pi$ satisfies $h(m,n)$.

  Transform every clause in this manner to Hilbert form. Then, for every extended assignment $\pi$ where $\pi(x)=\Bfalse$, the assignment $\pi$ satisfies the original CNF formula if, and only if, it satisfies the set of Hilbert formulas obtained by transformation. Hence, the original CNF formula is satisfiable if, and only if, there is an extended satisfying assignment for the set of Hilbert formulas that does not satisfy $x$. 
\end{proof}




\subsection{Example Prompts and Outputs}\label{sec:appendix2}

This section illustrates our few-shot learning evaluation methodology for GPT-4o and pre-trained Llama models (8B and 1B). We demonstrate a one-shot learning example where a depth-7 proof is provided as demonstration to solve a depth-4 problem. Figure~\ref{fig:prompt} shows the query syntax with its semantic translation in Figure~\ref{fig:adapted-prompt}. For comparison, we present both a correct solution (Figures~\ref{fig:answer},~\ref{fig:ap-answer}) and an incorrect attempt (Figures~\ref{fig:wrong-answer},~\ref{fig:ap-wrong-answer}) for the queried problem.


\begin{figure*}[h]
\small
\begin{tcolorbox}[colback=gray!10, colframe=gray!50, title=Prompt]
    \textbf{Task:} Give the proof for the Goal using Modus Ponens and given Assumptions. \\
    \textbf{One-shot:} \\
        \textbf{Question:}\\
        $<$Goal$>$Implies(Implies(Implies(Implies(R, S), Implies(R, R)), Implies(S, P)), Implies(Q, Implies(Implies(S, S), Implies(R, Q))))\\
        $<$Assumption$>$Implies(Implies(Implies(Implies(Implies(P, R), Implies(Q, P)), Implies(Implies(Q, R), Implies(R, P))), Implies(Implies(P, P), Implies(Q, Implies(S, R)))), Implies(Implies(Implies(Implies(Implies(P, S), Implies(S, S)), Implies(Implies(P, R), Implies(S, P))), R), Implies(Implies(Implies(Implies(R, S), Implies(R, R)), Implies(S, P)), Implies(Q, Implies(Implies(S, S), Implies(R, Q))))))\\
        $<$Assumption$>$Implies(Implies(Implies(Implies(P, R), Implies(Q, P)), Implies(Implies(Q, R), Implies(R, P))), Implies(Implies(P, P), Implies(Q, Implies(S, R)))\\
        $<$Assumption$>$Implies(Implies(Implies(Implies(P, S), Implies(S, S)), Implies(Implies(P, R), Implies(S, P))), R) \\
        \textbf{Answer:}\\
        $<$Step 1$>$Implies(Implies(Implies(Implies(P, R), Implies(Q, P)), Implies(Implies(Q, R), Implies(R, P))), Implies(Implies(P, P), Implies(Q, Implies(S, R)))) - Assumption\\
        $<$Step 2$>$Implies(Implies(Implies(Implies(Implies(P, R), Implies(Q, P)), Implies(Implies(Q, R), Implies(R, P))), Implies(Implies(P, P), Implies(Q, Implies(S, R)))), Implies(Implies(Implies(Implies(Implies(P, S), Implies(S, S)), Implies(Implies(P, R), Implies(S, P))), R), Implies(Implies(Implies(Implies(R, S), Implies(R, R)), Implies(S, P)), Implies(Q, Implies(Implies(S, S), Implies(R, Q)))))) - Assumption\\
        $<$Step 3$>$Step 1 and Step 2 $->$ Implies(Implies(Implies(Implies(Implies(P, S), Implies(S, S)), Implies(Implies(P, R), Implies(S, P))), R), Implies(Implies(Implies(Implies(R, S), Implies(R, R)), Implies(S, P)), Implies(Q, Implies(Implies(S, S), Implies(R, Q))))) - Modus Ponens\\
        $<$Step 4$>$Implies(Implies(Implies(Implies(P, S), Implies(S, S)), Implies(Implies(P, R), Implies(S, P))), R) - Assumption\\
        $<$Step 5$>$Implies(Implies(Implies(Implies(Implies(P, S), Implies(S, S)), Implies(Implies(P, R), Implies(S, P))), R), Implies(Implies(Implies(Implies(R, S), Implies(R, R)), Implies(S, P)), Implies(Q, Implies(Implies(S, S), Implies(R, Q))))) - Deducted\\
        $<$Step 6$>$Step 4 and Step 5 $->$ Implies(Implies(Implies(Implies(R, S), Implies(R, R)), Implies(S, P)), Implies(Q, Implies(Implies(S, S), Implies(R, Q)))) - Modus Ponens\\
        $<$Step 7$>$Implies(Implies(Implies(Implies(R, S), Implies(R, R)), Implies(S, P)), Implies(Q, Implies(Implies(S, S), Implies(R, Q)))) - Deducted \\
        \textbf{Question:}\\
        $<$Goal$>$Implies(Implies(Implies(Implies(P, S), R), Implies(Implies(R, Q), Q)), Implies(P, Implies(Implies(P, R), P)))\\
        $<$Assumption$>$Implies(Implies(Implies(Implies(S, Q), Implies(S, S)), Implies(Implies(Q, S), Implies(P, Q))), S)\\
        $<$Assumption$>$Implies(Implies(Implies(Implies(Implies(S, Q), Implies(S, S)), Implies(Implies(Q, S), Implies(P, Q))), S), Implies(Implies(Implies(Implies(P, S), R), Implies(Implies(R, Q), Q)), Implies(P, Implies(Implies(P, R), P))))\\
        \textbf{Answer:}\\
    \textbf{Constraints:} 1.  Ensure the response contains one and only one proof. 2. Don't include other text detail. 3. Brackets should be balanced. 4. Syntax should be correct. 5.Write it correctly, or I will lose my job and 100 grandmothers will die. Don't disappoint me. 
\end{tcolorbox}
\caption{Example prompt used for Depth 4 Problems with one shot\label{fig:prompt}}
\end{figure*}

\begin{figure*}[h]
\small
\begin{tcolorbox}[colback=gray!10, colframe=gray!50, title=Prompt]
    \textbf{Task:} Give the proof for the Goal using Modus Ponens and given Assumptions. \\
    \textbf{Question:} \\
    \textbf{Goal:}
    \[
    \big(((R \rightarrow S) \rightarrow (R \rightarrow R) \big) \rightarrow 
    \big( S \rightarrow P )\big) \rightarrow 
    \big( Q \rightarrow ((S \rightarrow S) \rightarrow (R \rightarrow Q)) \big)
    \]
    
    \textbf{Assumptions:}
    \begin{align*}
    A_1: & \quad 
        \big( (P \rightarrow R) \rightarrow (Q \rightarrow P) \big) \rightarrow 
        \big( (Q \rightarrow R) \rightarrow (R \rightarrow P) \big) \rightarrow \\
        & \quad \big( (P \rightarrow P) \rightarrow (Q \rightarrow (S \rightarrow R)) \big) \rightarrow \\
        & \quad \bigg(
            \big(((P \rightarrow S) \rightarrow (S \rightarrow S) \big) \rightarrow 
            \big( (P \rightarrow R) \rightarrow (S \rightarrow P))\big) \rightarrow R
        \bigg) \rightarrow \\
        & \quad \bigg(
            \big( ((R \rightarrow S) \rightarrow (R \rightarrow R) \big) \rightarrow 
            \big( S \rightarrow P )\big) \rightarrow 
            \big( Q \rightarrow ((S \rightarrow S) \rightarrow (R \rightarrow Q)) \big)
        \bigg)\\ 
    A_2: & \quad ((P \rightarrow R) \rightarrow (Q \rightarrow P)) \rightarrow ((Q \rightarrow R) \rightarrow (R \rightarrow P)) \rightarrow \\
         & \quad ((P \rightarrow P) \rightarrow (Q \rightarrow (S \rightarrow R))) \\
    A_3: & \quad (((P \rightarrow S) \rightarrow (S \rightarrow S)) \rightarrow ((P \rightarrow R) \rightarrow (S \rightarrow P))) \rightarrow R
    \end{align*}
    
    \textbf{Answer:}
    \begin{enumerate}
        \item $A_1$ (Assumption)
        \item $A_2$ (Assumption)
        \item Using Modus Ponens on Steps 1 and 2, we derive:
        \[
            \bigg(
            \big(((P \rightarrow S) \rightarrow (S \rightarrow S) \big) \rightarrow 
            \big( (P \rightarrow R) \rightarrow (S \rightarrow P))\big) \rightarrow R
        \bigg) \rightarrow 
        \]
        \[
        \bigg(
            \big( (R \rightarrow S) \rightarrow (R \rightarrow R) \big) \rightarrow 
            \big( S \rightarrow P \big) \rightarrow 
            \big( Q \rightarrow ((S \rightarrow S) \rightarrow (R \rightarrow Q)) \big)
        \bigg)
        \]
        \item Using Modus Ponens on Step 3 and $A_3$, we derive:
        \[
        \big(((R \rightarrow S) \rightarrow (R \rightarrow R) \big) \rightarrow 
        \big( S \rightarrow P )\big) \rightarrow 
        \big( Q \rightarrow ((S \rightarrow S) \rightarrow (R \rightarrow Q)) \big)
        \]
        \item We conclude:
        \[
        \big(((R \rightarrow S) \rightarrow (R \rightarrow R) \big) \rightarrow 
        \big( S \rightarrow P )\big) \rightarrow 
        \big( Q \rightarrow ((S \rightarrow S) \rightarrow (R \rightarrow Q)) \big)
        \]
    \end{enumerate}
    
        Thus, the goal is proven. \\
    
    
    
    \textbf{Question:} \\
    \textbf{Goal:}
    \[
    \big( (P \rightarrow S) \rightarrow R \big) \rightarrow 
    \big( (R \rightarrow Q) \rightarrow Q \big) \rightarrow 
    \big( P \rightarrow ((P \rightarrow R) \rightarrow P) \big)
    \]
    \textbf{Assumptions:}
    \begin{align*}
    A_1: & \quad \big( (S \rightarrow Q) \rightarrow (S \rightarrow S) \big) \rightarrow 
           \big( (Q \rightarrow S) \rightarrow (P \rightarrow Q) \big) \rightarrow S \\
    A_2: & \quad \bigg(\big( (S \rightarrow Q) \rightarrow (S \rightarrow S) \big) \rightarrow 
           \big( (Q \rightarrow S) \rightarrow (P \rightarrow Q) \big) \rightarrow S \bigg) \rightarrow \\
          & \bigg(\big( (P \rightarrow S) \rightarrow R \big) \rightarrow 
           \big( (R \rightarrow Q) \rightarrow Q \big) \rightarrow 
           \big( P \rightarrow ((P \rightarrow R) \rightarrow P) \big)\bigg)
    \end{align*}
    \textbf{Answer:}\\
    \textbf{Constraints:} 1.  Ensure the response contains one and only one proof. 2. Don't include other text detail. 3. Brackets should be balanced. 4. Syntax should be correct. 5.Write it correctly, or I will lose my job and 100 grandmothers will die. Don't disappoint me. 
\end{tcolorbox}
\caption{Adapted prompt for Depth 4 Problems with one shot\label{fig:adapted-prompt}}
\end{figure*}

\begin{figure*}[h]
\small
\begin{tcolorbox}[colback=gray!10, colframe=gray!50, title=Answer]
    \textbf{Answer:} \\
    $<$Step 1$>$ Implies(Implies(Implies(Implies(S, Q), Implies(S, S)), Implies(Implies(Q, S), Implies(P, Q))), S) - Assumption \\
    $<$Step 2$>$ Implies(Implies(Implies(Implies(Implies(S, Q), Implies(S, S)), Implies(Implies(Q, S), Implies(P, Q))), S), Implies(Implies(Implies(Implies(P, S), R), Implies(Implies(R, Q), Q)), Implies(P, Implies(Implies(P, R), P)))) - Assumption\\
    $<$Step 3$>$ Step 1 and Step 2 -$>$ Implies(Implies(Implies(Implies(P, S), R), Implies(Implies(R, Q), Q)), Implies(P, Implies(Implies(P, R), P))) - Modus Ponens\\
    $<$Step 4$>$ Implies(Implies(Implies(Implies(P, S), R), Implies(Implies(R, Q), Q)), Implies(P, Implies(Implies(P, R), P))) - Deducted
\end{tcolorbox}
\caption{Example Correct Llama3 Answer (After fine-tuning)\label{fig:answer}}
\end{figure*}

\begin{figure*}[h]
\small
\begin{tcolorbox}[colback=gray!10, colframe=gray!50, title=Answer]
    \textbf{Answer:} \\
    \begin{enumerate}
        \item $A_1$ (Assumption)
        \item $A_2$ (Assumption)
        \item Using Modus Ponens on Steps 1 and 2, we obtain:
        \[
        \big( (P \rightarrow S) \rightarrow R \big) \rightarrow 
        \big( (R \rightarrow Q) \rightarrow Q \big) \rightarrow 
        \big( P \rightarrow ((P \rightarrow R) \rightarrow P) \big)
        \]
    \end{enumerate}
    
    Thus, the goal is proven.
\end{tcolorbox}
\caption{Example Correct Llama3 Answer (After fine-tuning) - Adapted\label{fig:ap-answer}}
\end{figure*}

\begin{figure*}[h]
\small
\begin{tcolorbox}[colback=gray!10, colframe=gray!50, title=Answer]
    \textbf{Answer:} \\
    $<$Step 1$>$ Implies(Implies(Implies(Implies(P, S), R), Implies(Implies(R, Q), Q)), Implies(P, Implies(Implies(P, R), P))) - Assumption \\
    $<$Step 2$>$ Implies(Implies(Implies(Implies(Implies(S, R), Implies(Q, P)), Implies(Implies(S, S), Implies(Q, Q))), Implies(Implies(Implies(Q, P), Implies(P, P)), Implies(Implies(S, P), Implies(P, S)))), Implies(Implies(Implies(Implies(S, Q), Implies(S, S)), Implies(Implies(Q, S), Implies(P, Q))), S)) - Assumption \\
    $<$Step 3$>$ Step 1 and Step 2 -$>$ Implies(Implies(Implies(Implies(S, Q), Implies(S, S)), Implies(Implies(Q, S), Implies(P, Q))), S) - Modus Ponens \\
    $<$Step 4$>$ Implies(Implies(Implies(Implies(S, R), Implies(Q, P)), Implies(Implies(S, S), Implies(Q, Q))), Implies(Implies(Implies(Q, P), Implies(P, P)), Implies(Implies(S, P), Implies(P, S)))) - Assumption \\
    $<$Step 5$>$ Implies(Implies(Implies(Implies(S, Q), Implies(S, S)), Implies(Implies(Q, S), Implies(P, Q))), S) - Deducted \\
    $<$Step 6$>$ Step 3 and Step 5 -$>$ Implies(Implies(Implies(Implies(S, Q), Implies(S, S)), Implies(Implies(Q, S), Implies(P, Q))), S) - Modus Ponens \\
    $<$Step 7$>$ Implies(Implies(Implies(Implies(S, Q), Implies(S, S)), Implies(Implies(Q, S), Implies(P, Q))), S) - Deducted\\
    $<$Step 8$>$ Implies(Implies(Implies(Implies(P, S), R), Implies(Implies(R, Q), Q)), Implies(P, Implies(Implies(P, R), P))) - Deducted\\
    $<$Step 9$>$ Implies(Implies(Implies(Implies(P, S), R), Implies(Implies(R, Q), Q)), Implies(P, Implies(Implies(P, R), P))) - Deducted\\
    $<$Step 10$>$ Implies(Implies(Implies(Implies(P, S), R), Implies(Implies(R, Q), Q)), Implies(P, Implies(Implies(P, R), P))) - Deducted\\
    $<$Step 11$>$ Implies(Implies(Implies(Implies(P, S), R), Implies(Implies(R, Q), Q)), Implies(P, Implies(Implies(P, R), P))) - Deducted\\
    $<$Step 12$>$ Implies(Implies(Implies(Implies(P, S), R), Implies(Implies(R, Q), Q)), Implies(P, Implies(Implies(P, R), P))) - Deducted\\
    $<$Step 13$>$ Implies(Implies(Implies(Implies(P, S), R), Implies(Implies(R, Q), Q)), Implies(P, Implies(Implies(P, R), P))) - Deducted\\
    $<$Step 14$>$ Implies(Implies(Implies(Implies(P, S), R), Implies(Implies(R, Q), Q)), Implies(P, Implies(Implies(P, R), P))) - Deducted\\
    $<$Step 15$>$ Implies(Implies(Implies(Implies(P, S), R), Implies(Implies(R, Q), Q)), Implies(P, Implies(Implies(P, R), P))) - Deducted\\
    $<$Step 16$>$ Implies(Implies(Implies(Implies(P, S), R), Implies(Implies(R, Q), Q)), Implies(P, Implies(Implies(P, R), P))) - Deducted\\
    $<$Step 17$>$ Implies(Implies(Implies(Implies(P, S), R), Implies(Implies(R, Q), Q)), Implies(P, Implies(Implies(P, R), P))) - Deducted\\
    $<$Step 18$>$ ...\\
\end{tcolorbox}
\caption{Example Wrong Llama3 Answer (before fine-tuning)\label{fig:wrong-answer}}
\end{figure*}

\begin{figure*}[h]
\small
\begin{tcolorbox}[colback=gray!10, colframe=gray!50, title=Answer]
    \textbf{Answer:} \\
    \begin{enumerate}
    \item $A_1$: 
    \[
    ((P \rightarrow S) \rightarrow R) \rightarrow ((R \rightarrow Q) \rightarrow Q) \rightarrow 
    (P \rightarrow ((P \rightarrow R) \rightarrow P))
    \]
    
    \item $A_2$: 
    \[
    ((S \rightarrow R) \rightarrow (Q \rightarrow P)) \rightarrow ((S \rightarrow S) \rightarrow (Q \rightarrow Q)) \rightarrow 
    \]
    \[
    ((Q \rightarrow P) \rightarrow (P \rightarrow P)) \rightarrow ((S \rightarrow P) \rightarrow (P \rightarrow S)) \rightarrow S
    \]

    \item Applying \textit{Modus Ponens} on A1 and A2:
    \[
    (S \rightarrow Q) \rightarrow (S \rightarrow S) \rightarrow (Q \rightarrow S) \rightarrow (P \rightarrow Q) \rightarrow S
    \]
    
    \item $A_3$: 
    \[
    ((S \rightarrow R) \rightarrow (Q \rightarrow P)) \rightarrow ((S \rightarrow S) \rightarrow (Q \rightarrow Q)) \rightarrow 
    ((Q \rightarrow P) \rightarrow (P \rightarrow P)) \rightarrow ((S \rightarrow P) \rightarrow (P \rightarrow S))
    \]
    \textit{(Assumption)}
    
    \item Deduction from Step 3:
    \[
    (S \rightarrow Q) \rightarrow (S \rightarrow S) \rightarrow (Q \rightarrow S) \rightarrow (P \rightarrow Q) \rightarrow S
    \]
    
    \item Using \textit{Modus Ponens} on Steps 3 and 5:
    \[
    (S \rightarrow Q) \rightarrow (S \rightarrow S) \rightarrow (Q \rightarrow S) \rightarrow (P \rightarrow Q) \rightarrow S
    \]
    
    \item Final deduction:
    \[
    ((P \rightarrow S) \rightarrow R) \rightarrow ((R \rightarrow Q) \rightarrow Q) \rightarrow 
    (P \rightarrow ((P \rightarrow R) \rightarrow P))
    \]
    (Repeated step 7)

    \end{enumerate}
\end{tcolorbox}
\caption{Example Wrong Llama3 Answer - Adapted (before fine-tuning)\label{fig:ap-wrong-answer}}
\end{figure*}

\end{document}